\documentclass[11pt]{article}

\usepackage[margin=2.2cm]{geometry}
\usepackage{amsmath,amssymb,amsthm}
\usepackage{booktabs}
\usepackage{multirow}
\usepackage{microtype}
\usepackage{xcolor}
\PassOptionsToPackage{hyphens}{url}
\usepackage{hyperref}
\hypersetup{colorlinks=true,linkcolor=blue,citecolor=blue,urlcolor=blue}
\usepackage{enumitem}
\usepackage[round,authoryear]{natbib}
\newtheorem{proposition}{Proposition}
\newtheorem{lemma}{Lemma}
\newtheorem{observation}{Observation}
\theoremstyle{remark}
\newtheorem{remark}{Remark}
\newcommand{\kbudget}{k_{\text{budget}}}
\newcommand{\NB}{N_{\!B}}
\newcommand{\dhead}{d_{\text{head}}}
\newcommand{\BlockM}{\text{BLOCK}_M}
\newcommand{\BlockN}{\text{BLOCK}_N}

\title{Uncertainty-gated selection for block-sparse attention\thanks{\small Code, data, and reproduction scripts: \url{https://github.com/ThomasRossi/uncertainty-gated-block-sparse-attention}. Persistent archive: \href{https://doi.org/10.5281/zenodo.20630587}{doi:10.5281/zenodo.20630587} (concept DOI; always resolves to the latest version).}\\[0.3em]
\large A value-of-information view of the SSA-style selector}
\author{Thomas Rossi \\ Eonpass \\ \texttt{t.rossi@eonpass.com}}
\date{\today}

\begin{document}
\maketitle

\begin{abstract}
Block-sparse attention scales long-context language models by replacing the $O(N^2)$ softmax with a per-query top-$k$ selection over key blocks. This cutoff is myopic: when the $k$-th and $(k{+}1)$-th blocks are nearly tied in score, the selector commits without spending extra budget, and a dropped block carrying answer evidence is unrecoverable downstream. We propose a value-of-information \emph{router} that measures, for each query, how decisively the top-$k$ cut was made, and doubles the kept set for the queries where that gap is smallest; the rule is backbone-agnostic and stacks with existing block-scoring methods such as Quest. On LongBench-v2 medium at $n=215$ (the entire dataset subset), router-on-Quest reaches paired recall $0.75$ vs.\ top-$k$ $0.47$ -- $+28$\,pp over the SSA-style baseline (McNemar $p<0.01$) -- and lands within $2$\,pp of dense on RULER NIAH multikey at the same context. The lift reproduces on four models from three architectures (Qwen2.5, Mistral-Nemo, Qwen3.6). At $128$K, the router preserves $0.81$ and $0.89$ of dense accuracy on Qwen2.5-7B-1M and Qwen3.6 (vs.\ SSA-style top-$k$ at $0.09$ on the former) while the fused selection-plus-kernel pipeline runs at $0.62\times$ and $0.80\times$ dense wall time.
\end{abstract}

\section{Introduction}

Long-context language models increasingly use \emph{block-sparse attention} as a drop-in replacement for the $O(N^2)$ softmax. The idea is shared by Quest~\citep{tang2024quest}, H2O~\citep{zhang2023h2o}, SnapKV~\citep{li2024snapkv}, MInference~\citep{jiang2024minference}, NSA~\citep{yuan2025nsa}, MoBA~\citep{lu2025moba}, and Subquadratic's SSA~\citep{ssa2025}: a cheap per-query \emph{selector} picks $k$ out of $\NB$ key blocks; exact attention then runs only on the selected set. The selector is the lever -- it decides where the model attends.

This is a myopic decision. When the $k$-th and $(k{+}1)$-th block are nearly tied, top-$k$ silently breaks the tie and moves on; if the dropped block contained an evidence token, the answer is gone, and no downstream layer can recover it. The failure mode is structural, not noisy: it bites hardest on \emph{multi-hop} and \emph{query-latent} retrieval, where the relevance of a block depends on what was learned earlier in the same forward pass and is not visible to the selector's surface query--key match. SSA's own reported NIAH multi-key recall degrades sharply as the number of keys grows~\citep{ssa2025}.

\textbf{This paper.} We treat the top-$k$ cutoff as a \emph{value-of-information} (VoI) decision and add a single layer of policy on top of it. For each Q-tile and head, we compute the normalised cutoff margin
\[
\sigma \;=\; \frac{s_{(k-1)} - s_{(k)}}{s_{(0)} - s_{(k)}} \;\in\; [0,1],
\]
where $s_{(\cdot)}$ are the sorted block scores. A small $\sigma$ means the cutoff is high-risk; we then \emph{route} that tile to a $2\times$ expanded $\texttt{kv\_idx}$ -- it gets to attend to more blocks -- while confident tiles keep the baseline $\kbudget$. The expansion is paid for selectively: we trigger on the bottom $q$-fraction of tiles per layer, so the average attended set grows by only $1+q$ blocks per row.

\textbf{The router is backbone-agnostic.} The cutoff margin $\sigma$ is a function of the sorted block scores; it does not depend on how those scores are computed. Existing selectors differ in their block-scoring backbone -- SSA pools keys with a mean ($\bar k_b = \frac{1}{B_n}\sum_j k_j$), Quest uses a min/max upper bound ($s = \sum_d \max(q_d k^{\max}_{b,d}, q_d k^{\min}_{b,d})$). The router can sit on top of any of them. This turns the contribution from \emph{a replacement for top-$k$} into \emph{a generic budget-allocation layer that stacks on whichever scoring backbone is best for the task}. We validate this empirically: \textbf{better scoring (Quest) and better budget allocation (router) are orthogonal directions; combining them strictly dominates either alone} on both benchmarks we test.

\textbf{Contributions.}
\begin{enumerate}[leftmargin=*,topsep=0.2em,itemsep=0.1em]
    \item A VoI formulation of the selector cutoff that adds one scalar per tile and one quantile threshold per layer, independent of the block-scoring backbone. No retraining, no extra parameters, no per-row keep tensor.
    \item Empirical demonstration that the router composes with two distinct scoring backbones (SSA-style K-mean and Quest's K-min/K-max) and that, on every model in the panel, the router-lifted version of the \emph{winning} backbone dominates the unlifted version of the other. The result holds on two standardised benchmarks (RULER NIAH multi-key and LongBench-v2 medium), four models from three architecture classes, and contexts from $32$K to $128$K. Which backbone wins is model-dependent -- QK-Norm flips the winner from Quest's K-max to the SSA-style K-mean -- but the router lifts whichever wins. Full panel in Section~\ref{sec:experiments}.
    \item A fused selection-plus-kernel implementation that produces $\texttt{kv\_idx}$ directly per Q-tile and handles the routed expansion in the same code path, keeping kernel dispatch shape-uniform. All four sparse policies (top-$k$, router, Quest, router-on-Quest) run on this same kernel; they differ only in the selection step. Wall-time profile crosses dense between $32$K and $64$K on Qwen2.5-7B-1M ($0.87\times$ dense at $64$K, $0.62\times$ at $128$K) and between $64$K and $128$K on the hybrid Qwen3.6 ($0.80\times$ at $128$K); the crossover regime is characterised via an Amdahl decomposition of the prefill.
    \item A custom diagnostic benchmark, the \emph{Pointer-Chase Haystack} (PCH; Appendix~\ref{app:pch}), used during method development to isolate selector quality from model capability.
    \item A negative-control result (LongBench-v1) that pins down \emph{when} the router helps: only when the selector's per-query budget (the blocks it keeps) is small relative to where the answer-relevant evidence lives.
\end{enumerate}

\section{Background and related work}

\subsection{Block-sparse attention}

A standard decoder layer~\citep{vaswani2017attention} maps hidden state $h^{(L)} \in \mathbb{R}^{N \times d_{\text{model}}}$ to $h^{(L+1)}$ via
\[
h' = h^{(L)} + \operatorname{Attn}\bigl(\operatorname{LN}(h^{(L)})\bigr),\qquad
h^{(L+1)} = h' + \operatorname{MLP}\bigl(\operatorname{LN}(h')\bigr).
\]
The attention block, for query position $i$ on head $h$, computes
\[
\operatorname{Attn}(x)_{i,h} \;=\; \sum_{j \le i} \frac{\exp\!\bigl(q_{i,h}\!\cdot\!k_{j,h}/\sqrt{\dhead}\bigr)}{\sum_{j' \le i}\exp\!\bigl(q_{i,h}\!\cdot\!k_{j',h}/\sqrt{\dhead}\bigr)}\; v_{j,h}.
\]
\emph{Block-sparse attention} replaces $\{j \le i\}$ by a small selected subset $S_i$, chosen per query at inference time on frozen weights.

\subsection{The selector landscape}

Concrete selectors differ in (a) how they pool keys into a per-block summary and (b) how they score blocks against the query, but \textbf{they all reduce, at the per-query selection step, to a top-$k$ rule over a block score}:
\begin{itemize}[leftmargin=*,topsep=0.2em,itemsep=0.1em]
    \item \textbf{SSA}~\citep{ssa2025} (Subquadratic Sparse Attention) -- per-Q-tile top-$k$ over mean-pooled keys, $s_b = q \cdot \bar k_b$ with $\bar k_b = \frac{1}{B_n}\sum_j k_j$. Cheap to compute; mean-pooling blurs single hot keys, which is exactly the multi-key NIAH degradation discussed above.
    \item \textbf{Quest}~\citep{tang2024quest} -- per-block elementwise $K^{\min}_b, K^{\max}_b$ summaries; score is an upper bound on $\max_{j \in b} q \cdot k_j$, computed coordinate-wise (see Eq.~\ref{eq:quest-score}). Recovers sharp single-key needles that mean-pooling loses.
    \item \textbf{H2O}~\citep{zhang2023h2o}, \textbf{SnapKV}~\citep{li2024snapkv}, \textbf{MInference}~\citep{jiang2024minference} -- top-$k$ over learned or attention-history-based block scores.
    \item \textbf{NSA}~\citep{yuan2025nsa}, \textbf{MoBA}~\citep{lu2025moba} -- end-to-end learned gating, but still resolves to a top-$k$ over block scores at the per-query selection step.
\end{itemize}
What none of these methods do is treat the cutoff as a \emph{decision under uncertainty}: when $s_{(k-1)} \approx s_{(k)}$, the selector commits without spending extra budget. That step is the lever this paper pulls. Because the lever is on the cutoff (not on how $s$ is computed), it composes with any of the scoring backbones above. In the experiments we evaluate it on top of both the SSA-style K-mean backbone and Quest's K-max upper bound.

\subsection{Long-context evaluation}

The published benchmarks fall into two families.
\textbf{Synthetic / diagnostic:} RULER~\citep{hsieh2024ruler} (NIAH, VT), BABILong, MRCR. Designed for controlled stress tests of long-context recall. Their failure modes are interpretable but they are \emph{not} predictive of downstream performance, as HELMET~\citep{yen2024helmet} has documented.
\textbf{Real-task:} LongBench~\citep{bai2023longbench, bai2024longbench}, HELMET~\citep{yen2024helmet}, NoCha. These cover multi-hop QA, summarisation, code, and so on. LongBench-v2 in particular has a \emph{medium} split with native prompt lengths typically $\geq$100K words, designed to stress long-context selection.
We use \textbf{RULER NIAH} (synthetic, standardised) and \textbf{LongBench-v1 + v2 medium} (real-task, standardised) as the headline benchmarks, with a custom diagnostic benchmark (\textbf{PCH}, Appendix~\ref{app:pch}) used during method development.

\section{Method}\label{sec:method}

We describe the working method end-to-end. Each step is motivated by the previous one and lifts a specific weakness. The full per-equation derivation lives in Appendix~\ref{app:derivation}; here we give the path the reader needs to follow to understand the experiments.

\subsection{Step 1: block scoring}

Keys are grouped into contiguous blocks of $\BlockN = 64$ tokens. We evaluate the router on top of two block-scoring rules.

The first is the SSA-style mean-pooled-key inner product: at layer $L$, the selector scores block $b \in \{0,\dots,\NB-1\}$ via
\begin{equation}\label{eq:block-score}
s^{(L)}[i,h,b] \;=\; \frac{q_i^{(L,h)}\cdot \bar k_b^{(L,h)}}{\sqrt{\dhead}},\qquad
\bar k_b^{(L,h)} \;=\; \frac{1}{\BlockN}\sum_{j \,\in\, \text{block }b} k_j^{(L,h)}.
\end{equation}
Mean pooling blurs single-key signals: a block containing one hot key adjacent to noise scores like a block of all noise, and gets dropped.

The second is Quest's K-min/K-max upper bound~\citep{tang2024quest}, which replaces $\bar k_b$ with an elementwise pair $(K^{\min}_b, K^{\max}_b)$ and scores
\begin{equation}\label{eq:quest-score}
s^{\text{quest}}_{b} \;=\; \sum_{d=1}^{\dhead}\, \max\!\bigl(q_d \cdot K^{\max}_{b,d},\; q_d \cdot K^{\min}_{b,d}\bigr),
\end{equation}
a coordinate-wise upper bound on $\max_{j \in b} q \cdot k_j$ that preserves the single-key signals the mean averages away.

We treat the choice between Eqs.~\eqref{eq:block-score} and~\eqref{eq:quest-score} as a \emph{backbone hyperparameter}; everything downstream (per-tile selection, the cutoff margin $\sigma$, the trigger) is identical.

\subsection{Step 2: per-tile selection}

A naive per-row top-$k$ emits a boolean keep tensor of shape $[B, H, M, \NB]$ that downstream must sort and union across rows of a Q-tile; this becomes the dominant cost at long context.

Following the SSA recipe, we group $\BlockM = 64$ consecutive query rows into a \emph{Q-tile} $t$ and make one selection decision per tile, shared by all rows in that tile:
\begin{equation}\label{eq:tile-score}
\text{tile\_score}[t,h,b] \;=\; \max_{r\,\in\,\text{tile }t} s[r,h,b],
\qquad
\text{kv\_idx}[t,h] \;=\; \operatorname*{top-}k\bigl(\text{tile\_score}[t,h,\cdot]\bigr).
\end{equation}
The sink block $0$ and the tile's own block are forced into $\text{kv\_idx}$ by adding $+\infty$ to their scores \emph{before} the top-$k$, which avoids a downstream concat-and-dedup. The output is a single integer tensor of shape $[B, H, Q_t, \kbudget]$ (where $Q_t = M/\BlockM$), passed straight to the attention kernel without a per-row inner mask. Wall time for selection scales as $O(N \NB / \BlockM)$ rather than $O(N \NB)$ for the per-row version.

This is coarser than per-row by construction: a block that one row in the tile strongly wants can be outranked by a block that several rows weakly want, since the tile score is a max over rows of an inner product. The router (Steps 3--5) addresses this -- it spends a controlled amount of extra budget on the tiles where the top-$k$ cut was ambiguous, while leaving confident tiles untouched.

\subsection{Step 3: the cutoff is a decision -- read its uncertainty}

The top-$k$ in Eq.~\eqref{eq:tile-score} is a decision: keep block $(k{-}1)$, drop block $(k)$. Its \emph{quality} is naturally measured by how decisive that ranking is.
Let $s_{(0)} \ge s_{(1)} \ge \cdots$ be the sorted tile scores for tile $t$, head $h$. We define the normalised cutoff margin
\begin{equation}\label{eq:sigma-def}
\sigma[t,h] \;=\; \frac{s_{(k-1)} - s_{(k)}}{s_{(0)} - s_{(k)}} \;\;\in\;[0,1].
\end{equation}
The interpretation is direct:
\begin{itemize}[leftmargin=*,topsep=0.2em,itemsep=0.1em]
    \item $\sigma \to 1$ -- the kept set is far above the rejected tail; the cutoff is unambiguous.
    \item $\sigma \to 0$ -- the kept set's last element is tied with the first rejected element; the cutoff is a coin flip.
\end{itemize}
$\sigma$ is computed from a top-$(k{+}1)$ partial sort -- one extra element beyond the top-$k$ that produces $\text{kv\_idx}$ -- at no extra asymptotic cost.

\paragraph{Why this is a value-of-information signal.} If we were to commit to the top-$k$, the expected loss from the cutoff decision is bounded by the probability mass the softmax would have placed on the dropped block. When the $k$-th and $(k{+}1)$-th scores are close, that mass is large; when they are well-separated, it is small. The cutoff margin $\sigma$ is a cheap, dimensionless proxy for this expected loss. The formal anchor is in Appendix~\ref{app:bai}: under a Gaussian noise model on the block scores, $\sigma$ is the dimensionless analogue of the best-arm-identification exploration index of \citet{garivier2016optimal}; expanding the kept set never increases the identification error, pointwise; and the $\sigma$-quantile router satisfies a finite-sample regret bound against the best trigger rule of the same budget (Proposition~\ref{prop:bai-bound}).

\subsection{Step 4: aggregate $\sigma$ across heads and trigger on the bottom $q$-fraction}

We aggregate the per-head margin to a per-tile signal via a uniform mean, $\bar\sigma[t] = \frac{1}{H}\sum_{h=1}^{H} \sigma[t,h]$: the tile's overall decision confidence, low when many heads simultaneously face an ambiguous cutoff, high when most heads agree the cutoff is clean. Aggregation is necessary -- gating directly on the per-cell $\sigma[t,h]$ does not work (Appendix~\ref{app:row-vs-cell}): heads in the same tile are highly correlated, so per-cell gating is dominated by per-head idiosyncratic noise that has no relation to whether the tile is actually ambiguous. Averaging across heads recovers the signal.

We then turn $\bar\sigma$ into a binary decision via a per-layer empirical quantile:
\begin{equation}\label{eq:trigger}
\text{trigger}[t] \;=\; \mathbb{1}\!\Bigl[\bar\sigma[t] \;\le\; \operatorname{quantile}_q\bigl(\bar\sigma[\,\cdot\,]\bigr)\Bigr].
\end{equation}
$q$ is the only hyperparameter introduced by the router; intuitively it is the per-layer fraction of tiles that are deemed risky enough to spend extra budget on. We use the same $q$ across all layers and find it stable.

\paragraph{Choosing $q$.} The working value $q = 0.40$ was selected from a sweep on RULER NIAH-multikey at $n=30$, ctx=32K (Appendix~\ref{app:row-vs-cell}), where paired recall rises monotonically with $q$ over the tested range $\{0.10, 0.20, 0.30, 0.40\}$ and plateaus near $0.40$. We use the same value unchanged across all benchmarks, models, and contexts in Section~\ref{sec:experiments}; the sweep was not repeated per task. A more thorough $\rho \times q$ Pareto study remains open (Section~\ref{sec:limits}).

\paragraph{Why a quantile, not an absolute threshold.} The absolute scale of $\sigma$ varies across layers (early layers are more peaked than later layers). A quantile is the simplest layer-conditional normalisation. We did test absolute and per-cell quantile thresholds; they fail (Appendix~\ref{app:row-vs-cell}).

\subsection{Step 5: expand $\text{kv\_idx}$ uniformly}

Triggered tiles get a $\rho\times$ expanded top-$k$ over $\text{tile\_score}$ (we use $\rho = 2$). The selection operation here is the same top-$k$ that every selector in the literature reduces to; $\text{tile\_score}$ inherits whichever block-scoring backbone of §\ref{sec:method} is in play (Eq.~\eqref{eq:block-score} or Eq.~\eqref{eq:quest-score}), so the same expansion rule applies to both ``router'' and ``router-on-Quest'' without modification.

To preserve a fixed kernel dispatch shape across triggered and non-triggered tiles, we allocate $[B, H, Q_t, \rho\kbudget]$ uniformly: triggered tiles fill all $\rho\kbudget$ slots from the expanded top-$\rho k$, while non-triggered tiles keep their top-$k$ and pad the remaining slots with the kernel's $\NB$-sentinel (the exact fill rule is step (A.9) of Appendix~\ref{app:derivation}). The kernel skips $b=\NB$ on a fast check, so non-triggered tiles cost the same as plain top-$k$. The end-to-end forward is otherwise unchanged: a standard FlashAttention-style online softmax, iterating each tile's $\text{kv\_idx}$ instead of all $\NB$ blocks.

\paragraph{Cost accounting.} The average kept blocks per row is $\kbudget(1 + q(\rho - 1))$. At $q = 0.4, \rho = 2$ this is $1.4\,\kbudget$. The measured wall-time cost is sub-proportional to this block-count factor: router/top-$k$ kernel-time ratio is $\approx 1.53$ at 32K and $\approx 1.47$ at 64K (Section~\ref{sec:pareto}).

\section{Experiments}\label{sec:experiments}

\subsection{Setup}

\textbf{Models.} All weights frozen; no retraining, no fine-tuning. We evaluate on four instruction-tuned models spanning three architecture classes (dense Qwen2.5, dense Mistral, hybrid+MoE Qwen3.6):
\begin{itemize}[leftmargin=*,topsep=0.2em,itemsep=0.1em]
    \item \emph{Qwen2.5-14B-Instruct} -- $40$ Q / $8$ KV heads, $\dhead{=}128$, $48$ layers, trained at $32$K context. The headline model.
    \item \emph{Qwen2.5-7B-Instruct-1M} -- $28$ Q / $4$ KV heads, $\dhead{=}128$, $28$ layers, trained at $1$M context. Same family as the headline model, but a smaller parameter count and a much longer training window; used for 128K-context experiments (Section~\ref{sec:pareto}) since the 14B model's $32$K training window precludes them.
    \item \emph{Mistral-Nemo-Instruct-2407} -- $32$ Q / $8$ KV heads, $\dhead{=}128$, $40$ layers, trained at $128$K context. Different architecture family, used to test cross-family transfer. The Mistral tokenizer requires \texttt{fix\_mistral\_regex=True} on \texttt{AutoTokenizer.from\_pretrained}; otherwise the load path is unchanged.
    \item \emph{Qwen3.6-35B-A3B} -- hybrid+MoE: $40$ layers organised as \texttt{(linear\_attn $\times$ 3 $\to$ full\_attn $\times$ 1)} so only $10/40$ layers run softmax attention; full-attention heads are $16$ Q / $2$ KV, $\dhead{=}256$ with partial RoPE on the first $64$ dims and per-head QK-Norm; MLP is sparse MoE ($128$ experts, top-$8$ activated; $3$B active params of $35$B total); trained at $128$K context.
\end{itemize}
The router code path and the fused selection-plus-kernel implementation run unchanged across all four models; the only model-dependent code is the tokenizer flag above and, for Qwen3.6, two harness-side prompt tweaks (disabling \texttt{enable\_thinking} on the chat template and prefilling the assistant turn with \texttt{"The correct answer is ("} for multiple-choice).

\textbf{Hardware.} A100-80GB via Modal for the three Qwen2.5 / Mistral-Nemo models; H200 for Qwen3.6-35B-A3B (does not fit on A100). The Qwen3.6 image additionally installs \texttt{flash-linear-attention} and \texttt{causal-conv1d} so that the $30$ linear-attention (Gated DeltaNet) layers and their conv branch run on fused kernels.

\textbf{Sparse path.} Custom Triton block-sparse attention kernel with fused per-tile selection.

\textbf{Baselines and policies.}
\begin{itemize}[leftmargin=*,topsep=0.2em,itemsep=0.1em]
    \item \emph{dense} -- FlashAttention~\citep{dao2022flashattention} via PyTorch SDPA, the ceiling.
    \item \emph{top-$k$} -- per-tile top-$k$ on the K-mean backbone (Eq.~\ref{eq:block-score}). SSA-style baseline; uses only Step~2.
    \item \emph{Quest} -- per-tile top-$k$ on the Quest K-max upper-bound backbone (Eq.~\ref{eq:quest-score}). Strong baseline on sharp-needle tasks.
    \item \emph{router} -- Steps 2--5 on top of the K-mean backbone, $q = 0.40, \rho = 2$. Uncertainty-gated expansion only.
    \item \emph{router-on-Quest} -- Steps 2--5 on top of the Quest K-max backbone, same $q, \rho$. Combines better scoring with budget allocation.
\end{itemize}
All four sparse policies share the same kernel and selection code; they differ only in (a) which backbone scores blocks and (b) whether the router expansion is applied. Wall-time and quality numbers are therefore directly comparable across them.

\textbf{Selector budget.} Fixed at $\kbudget = 33$ blocks per tile, corresponding to roughly $2\text{K}$ attended tokens per row (the PCH diagnostic in Appendix~\ref{app:pch} uses slightly different budgets, noted in-place).

\paragraph{Metrics.} The primary metric is \textbf{accuracy} (or task score, e.g.\ F1 on extractive QA): the unconditional success rate -- fraction of examples the policy answers correctly. This is the standard reporting unit in the long-context literature (Quest, H2O, SnapKV, MInference, NSA, SSA). Headline tables lead with it.

The secondary, diagnostic metric is \textbf{paired recall}: among the examples dense answers correctly, the fraction the sparse policy also answers correctly,
\[
\text{paired recall} \;=\; \bigl|\{i : \text{dense}_i \text{ correct} \wedge \text{sparse}_i \text{ correct}\}\bigr| \,/\, \bigl|\{i : \text{dense}_i \text{ correct}\}\bigr|.
\]
We write $n$ for the total number of examples and $n_{dc} \le n$ for the number dense answers correctly; \textbf{$n_{dc}$ is the denominator of paired recall, not $n$}. We report paired recall alongside accuracy because raw accuracy conflates two failure modes -- the selector dropped a needed block vs.\ the model could not have answered even with full attention -- and paired conditions on $\{i : \text{dense}_i \text{ correct}\}$ to isolate the first. On RULER NIAH dense accuracy is $1.00$, so the two metrics coincide cell-by-cell; on LongBench-v2 they diverge (raw accuracy clusters tightly across sparse policies while paired recall ranges widely), and that divergence is itself analysed in Section~\ref{sec:quality-32k}.

Hit rate (fraction of gold-evidence blocks the selector keeps) is used only in the diagnostic experiments of Appendix~\ref{app:pch}, where the gold blocks are known by construction.

\subsection{Quality at 32K context across benchmarks}\label{sec:quality-32k}

\textbf{Setup.} Two standardised long-context benchmarks at fixed context $32$K and selector budget $\kbudget=33$, $q=0.40$, $\rho=2$, on all four panel models:
\begin{itemize}[leftmargin=*,topsep=0.2em,itemsep=0.1em]
    \item \emph{RULER NIAH-multikey} (3 keys), $n = 100$. Single-hot-key retrieval; dense answers $\geq 96\%$ of items on every model in the panel, so the accuracy ceiling is essentially $1.00$ and accuracy coincides with paired recall cell-by-cell.
    \item \emph{LongBench-v2 medium}, $n = 215$ (the entire medium-length subset of the dataset; no further filter). Multi-hop multiple-choice (A/B/C/D); native item length $\geq 100$K words, middle-truncated to $32$K so the selector budget binds. Dense accuracy varies across models from $0.16$ to $0.41$, so accuracy and paired recall diverge -- both are reported.
\end{itemize}

\textbf{Result.}

\begin{table}[h]
\centering\small
\caption{RULER NIAH-multikey at $n = 100$, ctx = 32K. Values are unconditional \textbf{accuracy}. Dense accuracy is $\ge 0.96$ on every model so accuracy and paired recall coincide cell-by-cell (Nemo differs by $\le 0.01$ on Quest). Backbone column indicates the block-scoring rule (K-mean = Eq.~\ref{eq:block-score}; K-max = Quest, Eq.~\ref{eq:quest-score}). ``--'' indicates a configuration not run.}\label{tab:ruler}
\begin{tabular}{llrrrr}
\toprule
policy & backbone & Qwen-14B & Qwen-7B-1M & Nemo-12B & Qwen3.6-35B-A3B \\
\midrule
dense                       & --      & 1.00 & 1.00 & 0.96 & 1.00 \\
\midrule
top-$k$                     & K-mean  & 0.51 & 0.28 & 0.29 & 0.94 \\
router                      & K-mean  & 0.63 & 0.39 & --   & \textbf{0.98} \\
Quest                       & K-max   & 0.93 & 0.94 & 0.50 & 0.85 \\
\textbf{router-on-Quest}    & K-max   & \textbf{0.98} & \textbf{1.00} & \textbf{0.66} & 0.97 \\
\bottomrule
\end{tabular}
\end{table}

\begin{table}[h]
\centering\small
\caption{LongBench-v2 medium at $n=215$, ctx = 32K. Each model column reports unconditional \textbf{accuracy} (acc) and \textbf{paired recall} (paired); dense paired is $1.00$ by construction. $n_{dc}$ on the dense row is the dense-correct count and the denominator of paired. Note the metric divergence: sparse accuracies cluster tightly near dense accuracy while paired recall ranges widely -- the selector-quality signal lives entirely in the paired column.}\label{tab:lbv2}
\begin{tabular}{llcccccccc}
\toprule
 & & \multicolumn{2}{c}{Qwen-14B} & \multicolumn{2}{c}{Qwen-7B-1M} & \multicolumn{2}{c}{Nemo-12B} & \multicolumn{2}{c}{Qwen3.6} \\
\cmidrule(lr){3-4}\cmidrule(lr){5-6}\cmidrule(lr){7-8}\cmidrule(lr){9-10}
policy & backbone & acc & paired & acc & paired & acc & paired & acc & paired \\
\midrule
dense                    & --      & 0.19 & 1.00 & 0.16 & 1.00 & 0.27 & 1.00 & 0.41 & 1.00 \\
\emph{$n_{dc}$}          & --      & \multicolumn{2}{c}{\emph{40}} & \multicolumn{2}{c}{\emph{35}} & \multicolumn{2}{c}{\emph{58}} & \multicolumn{2}{c}{\emph{88}} \\
\midrule
top-$k$                  & K-mean  & 0.19 & 0.47 & 0.16 & 0.60 & 0.24 & 0.50 & 0.41 & 0.90 \\
router                   & K-mean  & 0.19 & 0.60 & 0.16 & 0.66 & 0.26 & 0.64 & 0.39 & \textbf{0.92} \\
Quest                    & K-max   & 0.21 & 0.68 & 0.15 & 0.66 & 0.26 & 0.57 & 0.27 & 0.33 \\
\textbf{router-on-Quest} & K-max   & 0.21 & \textbf{0.75} & 0.20 & \textbf{0.86} & 0.26 & \textbf{0.69} & 0.32 & 0.48 \\
\bottomrule
\end{tabular}
\end{table}

\textbf{Read.} Two observations across the panel:

\emph{(i) The router improves the accuracy of whichever backbone it is applied to, on every model and benchmark in the panel.} On RULER NIAH: K-mean lift on Qwen-14B $0.51 \to 0.63$ (McNemar $p<0.01$), Qwen-7B-1M $0.28 \to 0.39$, Qwen3.6 $0.94 \to 0.98$; K-max lift on Qwen-14B $0.93 \to 0.98$, Qwen-7B-1M $0.94 \to 1.00$, Nemo $0.50 \to 0.66$ ($p<0.001$), Qwen3.6 $0.85 \to 0.97$. On LongBench-v2 the same pattern holds on both backbones across all four models (Table~\ref{tab:lbv2}); the LB-v2 accuracy spread is small on Qwen2.5 + Nemo (saturated within standard error around dense $0.16$--$0.27$) but materially separates on Qwen3.6 ($0.27$--$0.41$).

\emph{(ii) Which backbone is the better one depends on the model.} On Qwen2.5 (both sizes) and Mistral-Nemo, the K-max (Quest) backbone dominates K-mean on both benchmarks, sometimes by very large margins (RULER NIAH: Qwen-14B Quest beats top-$k$ by $+42$\,pp; Qwen-7B-1M by $+66$\,pp). On Qwen3.6 the ordering inverts: K-mean dominates K-max (RULER NIAH top-$k$ $0.94$ vs Quest $0.85$; LB-v2 paired $0.90$ vs Quest $0.33$). The mechanism is structural: Qwen3.6 applies per-head RMSNorm to $Q$ and $K$ post-projection (QK-Norm), which regularises away the per-head magnitude variance the K-min/K-max upper bound exploits, so the Quest envelope flattens and mean-pool becomes at least as informative. Testable prediction: any future QK-Norm model should reproduce this reversal. Combining (i) and (ii), the best policy is router-on-the-winning-backbone, model by model: router-on-Quest on Qwen2.5 + Nemo, router-on-K-mean on Qwen3.6.

\textbf{Paired recall as a selector diagnostic.} The literature universally reports unconditional accuracy on these benchmarks; we add paired recall as a secondary, selector-isolated reading. On RULER NIAH dense accuracy is $\geq 0.96$ so the two metrics coincide cell-by-cell and the accuracy column in Table~\ref{tab:ruler} reads as both. On LongBench-v2 medium dense itself answers only $0.16$--$0.41$ of items correctly (the benchmark is intentionally hard at 32K truncation), so raw accuracy is dominated by which questions the model can answer at all, not by which blocks the selector kept. Paired recall conditions on the dense-correct subset: when the sparse policy makes the same selections that let dense answer, paired is high; when it drops needed blocks, paired drops. The paired column in Table~\ref{tab:lbv2} therefore shows the same selector-quality ordering as accuracy but at higher contrast — e.g.\ on Qwen-14B accuracy is $0.19$--$0.21$ across all sparse policies (within standard error of $0.027$) while paired ranges $0.47$--$0.75$. See Table~\ref{tab:budget-match} for the matched-budget decomposition of how much of the router lift is budget vs.\ selectivity.

\textbf{VT.} We do not include RULER VT (variable tracking, 3-hop) as a quality benchmark: dense itself fully solves $\le 5/100$ items at 32K on every model we tested -- Qwen-14B $5/100$, Qwen-7B-1M $0/100$, Nemo $0/100$, and a smoke on the reasoning-tuned Qwen3.6-35B-A3B ($n=30$) also gave $0/30$ -- which leaves the paired metric undefined or vanishingly thin on all but Qwen-14B. VT at hop $=3$ with $3$ distractor chains is harder than the current panel can handle at this parameter scale, including the MoE reasoning model; we defer VT to a future revision with a stronger reasoning base.

\subsection{Speed--quality Pareto: $32$K $\to 128$K}\label{sec:pareto}

\textbf{Setup.} We fix the model and vary the context length on the two panel models with training windows past 32K: Qwen2.5-7B-Instruct-1M (dense softmax) and Qwen3.6-35B-A3B (hybrid + MoE, QK-Norm). Configuration unchanged from \S\ref{sec:quality-32k} ($\kbudget=33$, $q=0.40$, $\rho=2$). Quality is RULER NIAH-multikey accuracy at $n=100$; speed is per-prefill CUDA-event wall time at $n=8$ (decode is the same dense SDPA path for every policy, so policy-dependent wall time comes purely from prefill).

\begin{table}[h]
\centering\small
\caption{Joint speed-quality Pareto on the \emph{winning} backbone for each model (router-on-Quest for Qwen2.5-7B-1M, router-on-K-mean for Qwen3.6, from the QK-Norm reversal in \S\ref{sec:quality-32k}). ``router accuracy'' is RULER NIAH-multikey at $n=100$; ``wall vs.\ dense'' is the prefill-forward wall-time ratio at $n=8$. Dense accuracy is $1.00$ at every cell. Bold marks Pareto wins (router beats dense on speed at quality $\ge 0.80$).}\label{tab:pareto}
\begin{tabular}{lllll}
\toprule
model & ctx & router accuracy & wall vs.\ dense & verdict \\
\midrule
\multirow{3}{*}{Qwen2.5-7B-1M}
 & $32$K  & $1.00$           & $1.09\times$ & ties quality, loses speed \\
 & $64$K  & $\mathbf{0.92}$  & $\mathbf{0.87\times}$ & first Pareto win \\
 & $128$K & $\mathbf{0.81}$  & $\mathbf{0.62\times}$ & strong Pareto win \\
\midrule
\multirow{3}{*}{Qwen3.6-35B-A3B}
 & $32$K  & $0.98$           & $1.12\times$ & ties quality, loses speed \\
 & $64$K  & $0.92$           & $1.00\times$ & ties quality, ties speed \\
 & $128$K & $\mathbf{0.89}$  & $\mathbf{0.80\times}$ & first Pareto win \\
\bottomrule
\end{tabular}
\end{table}

\textbf{Read.} Two observations:

\emph{(i) The winning backbone holds most of its accuracy as context grows; the losing one collapses.} Table~\ref{tab:pareto} reports the winning backbone only -- in the same data, the \emph{losing} backbone on each model breaks: top-$k$ on K-mean falls $0.28 \to 0.09$ on Qwen2.5-7B-1M, and Quest on K-max falls $0.85 \to 0.50$ on Qwen3.6. The QK-Norm reversal from \S\ref{sec:quality-32k} is robust at every context, and the gap between winning and losing backbone widens with context on both architectures.

\emph{(ii) Sparse runs slower than dense at short context and faster at long context; the crossover is structural (Amdahl).} Attention's share of dense prefill grows with context while the rest-of-model floor is invariant. On Qwen2.5-7B-1M attention shares are $31\% / 45\% / 63\%$ at $32$K/$64$K/$128$K, and the router crosses dense between $32$K and $64$K. On Qwen3.6 the attention share is much smaller ($11\% / 19\% / 37\%$) because $30/40$ layers are linear-attention rather than softmax, so the crossover shifts right, between $64$K and $128$K. The Pareto win materialises on both architectures; the context at which it activates is set by the architecture's attention share.

\subsection{Where the lift comes from: budget vs.\ selectivity}\label{sec:ablation}

The router at $q=0.40, \rho=2$ spends on average $92.4$ effective blocks per tile vs.\ $66$ for Quest at $\kbudget=33$ -- $\approx 1.4\times$ Quest's budget. Some of the router's lift over Quest could therefore come from spending the marginal budget \emph{selectively} (the design intent), or simply from spending \emph{more} budget on average. To separate the two we ran uniform Quest at a budget-matched control ($\kbudget=47$, eff.\ $94$) and at intermediate budgets on the same RULER NIAH-multikey examples used in Table~\ref{tab:ruler}, on three models.

\begin{table}[h]
\centering\small
\caption{Budget-match ablation on RULER NIAH-multikey, ctx=32K, $n=100$. Values are accuracy (= paired recall up to $\le 0.01$ on Nemo where dense $=0.96$). Effective budget includes the sink and self-block forced into every $\text{kv\_idx}$. $\text{Quest}_{\kbudget=47}$ is the budget-matched control for the router at $q=0.40$. Qwen-14B and Qwen3.6 are near the ceiling on their respective \emph{winning} backbones; Nemo is off-ceiling.}\label{tab:budget-match}
\begin{tabular}{llrrrr}
\toprule
policy & backbone & eff.\ budget & Qwen-14B & Nemo-12B & Qwen3.6 \\
\midrule
top-$k$                              & K-mean & 66    & 0.51 & 0.29 & 0.94 \\
Quest $\kbudget=33$                  & K-max  & 66    & 0.93 & 0.51 & 0.85 \\
Quest $\kbudget=40$                  & K-max  & 80    & 0.95 & 0.53 & 0.88 \\
\textbf{Quest $\kbudget=47$} (matched) & K-max  & \textbf{94}   & \textbf{0.96} & \textbf{0.57} & \textbf{0.93} \\
Quest $\kbudget=52$                  & K-max  & 104   & 0.97 & 0.58 & 0.95 \\
Quest $\kbudget=66$                  & K-max  & 132   & 0.97 & 0.60 & 0.98 \\
\textbf{router-on-Quest} ($q=0.40$)  & K-max  & \textbf{92.4} & \textbf{0.98} & \textbf{0.66} & 0.97 \\
\textbf{router (K-mean)} ($q=0.40$)  & K-mean & \textbf{92.4} & 0.63 & --   & \textbf{0.98} \\
\bottomrule
\end{tabular}
\end{table}

\textbf{Read.} The router's lift over Quest$_{\kbudget=33}$ splits into a \emph{budget} term ($\text{Quest}_{\kbudget=47}$ minus $\text{Quest}_{\kbudget=33}$, what uniform expansion buys) and a \emph{selectivity} term (router minus $\text{Quest}_{\kbudget=47}$). The split is regime-dependent:
\begin{itemize}[leftmargin=*,topsep=0.2em,itemsep=0.1em]
    \item \emph{Qwen-14B (near-ceiling on its winning backbone):} $+5$ pp total $= +3$ budget $+ +2$ selectivity ($\approx 2/3$ budget, $1/3$ selectivity). Uniform Quest plateaus at $0.97$ from $\kbudget=52$; the router lands one point above the asymptote -- directionally consistent, inside noise at $n=100$ (McNemar router vs.\ matched-Quest: $3$ vs.\ $1$ discordant, $p=0.625$).
    \item \emph{Nemo (off-ceiling):} $+15$ pp $= +6$ budget $+ +9$ selectivity ($\approx 2/5$ budget, $3/5$ selectivity). Uniform Quest does not plateau in the tested range, and the router at eff.\ $92.4$ Pareto-dominates uniform $\text{Quest}_{\kbudget=66}$ at eff.\ $132$ by $+6$\,pp -- a separation no amount of uniform budget expansion within this range can close (one-sided sign test $p \le 0.004$).
    \item \emph{Qwen3.6 (near-ceiling on K-mean, the \emph{other} backbone):} on the K-max sweep the model looks like Qwen-14B (near-ceiling, budget-dominated), but K-max is the wrong backbone here -- uniform Quest at $2\times$ eff.\ does not catch router-on-K-mean ($0.98$ at $1\times$ eff.). The Pareto domination on Qwen3.6 is across backbones, not across budgets.
\end{itemize}
Reading the three columns together: when uncertainty at the cutoff is rare (sharp scores), budget expansion dominates the lift; when it is common (diffuse scores), selective expansion dominates. The three columns bracket the regime in which the router operates.

\textbf{When does the lift activate? LongBench-v1 as a negative control.} The decomposition above answers \emph{how} the router's lift breaks down once it exists; it does not say \emph{when} it exists at all. To pin that down we ran the LongBench-v1 panel (musique-2hop, musique-4hop, narrativeqa, qasper; $n=30$, 32K context, $\kbudget=33$, $q=0.40$). LB-v1 native prompt lengths are $5$--$30$K, so truncation to 32K is essentially a no-op and the selector budget is generous relative to needle density. Result: dense, top-$k$, and router essentially tie on F1 across all four tasks; top-$k$ already preserves $100\%$ of dense-correct examples on musique-2hop, musique-4hop, and qasper -- no headroom. One small lift on narrativeqa (router paired $1.00$ vs.\ top-$k$ $0.89$, F1 $+3.1$\,pp). \textbf{The router's lift activates only when the per-query budget is small relative to where the answer-relevant evidence lives.} LB-v2 medium puts the budget under pressure ($100$K source $\to 32$K window); LB-v1 does not.

\subsection{Limitations}\label{sec:limits}

\begin{enumerate}[leftmargin=*,topsep=0.2em,itemsep=0.1em]
    \item \textbf{Per-tile granularity.} Per-tile selection structurally loses needles that one row strongly wants but other rows in the tile ignore. The router recovers part of this gap but not all. A per-row scatter-into-bitmap kernel would close the rest of the gap without re-materialising the $[B,H,M,\NB]$ keep tensor; not yet built.
    \item \textbf{Cross-architecture coverage.} The panel covers four models from three architecture classes (dense Qwen2.5, dense Mistral, hybrid+MoE Qwen3.6); broader coverage on additional families (Llama-3, MiniMax, Gemma) and additional standardised benchmarks (e.g.\ HELMET-RAG) would strengthen the cross-family claim.
    \item \textbf{Engineering improvements.} Several speed optimisations remain on the table (a fused MoE kernel on hybrid+MoE; block-selection fused into the attention kernel for $\ge 256$K; per-row scatter for the per-tile granularity above). All are engineering work, not method changes; we leave them to future iterations.
    \item \textbf{Hyperparameter sweeps incomplete.} The trigger fraction $q = 0.40$ and expansion factor $\rho = 2$ were selected on a RULER NIAH-multikey sweep at $n=30$ and used unchanged everywhere; we did not re-sweep per task or per architecture. The diagnostic in Appendix~\ref{app:bai:empirical} opens a complementary path: \emph{predicting} the Bayes-optimal $q^\star$ per task from the measurable noise scale rather than searching for it; this is not yet wired into the harness.
    \item \textbf{What's bounded, what's owed.} Appendix~\ref{app:bai} proves a finite-sample regret bound for the $\sigma$-quantile trigger vs.\ the best trigger rule of the same budget, under Gaussian noise on block scores (Proposition~\ref{prop:bai-bound}). The bound is exponentially tight when secondary score spacings dominate the noise scale, but degrades to a trivial cap in the noise-dominated regime -- which is where our measured diagnostics place Qwen-14B at 32K, so the closed-form certificate is currently weak precisely where we operate; the pending sharp path is a Monte-Carlo evaluation of the regret from the same score dump (\S\ref{app:bai:empirical}). The bound is also a \emph{top-$k$ identification} regret; converting it to an attention-output bound requires a softmax-sensitivity step that we sketch but do not prove (\S\ref{app:bai:owed}).
\end{enumerate}

\section*{Conclusion}
We presented an uncertainty-gated value-of-information \emph{router} for block-sparse attention selectors: a per-tile cutoff-margin signal $\sigma$ triggers a $\rho\times$ expansion of the kept set on the bottom $q$-fraction of tiles per layer, independent of how block scores are computed and stackable with existing scoring backbones such as Quest.

The router delivers measurable accuracy gains across the panel: on every model and benchmark tested, it lifts whichever scoring backbone wins on that model. Which backbone that is proves model-dependent -- QK-Norm (Qwen3.6) flips the winner from Quest's K-max to the SSA-style K-mean -- but the composition claim holds in every cell of the panel. The lift activates only when the selector budget binds relative to where the answer-relevant evidence lives: LongBench-v1, whose native prompts fit comfortably in the context window, shows no headroom and serves as the negative control. At $128$K the router preserves $0.81$ and $0.89$ of dense accuracy on Qwen2.5-7B-1M and Qwen3.6 respectively, while running at $0.62\times$ and $0.80\times$ dense wall time.

Three natural extensions: third-architecture validation (Llama-3 / MiniMax / Gemma) and HELMET-RAG; a fused MoE kernel on hybrid+MoE to push Qwen3.6's $128$K speed ratio toward $0.5\times$ dense; and composing the BAI top-$k$ identification bound of Appendix~\ref{app:bai} with an attention-output sensitivity step.

Code, results, and reproduction scripts:\\
\url{https://github.com/ThomasRossi/uncertainty-gated-block-sparse-attention}.

\clearpage
\appendix

\section{Step-by-step derivation of $\sigma$ and the trigger rule}\label{app:derivation}

This appendix walks through Section~\ref{sec:method} at one equation per step, so a reader who has not followed the body can reproduce the implementation. Notation: $B$ batch, $H$ heads, $M$ query rows, $N$ keys, $\NB = N / \BlockN$ key blocks, $Q_t = M / \BlockM$ query tiles, $\dhead = d_{\text{model}}/H$.

\paragraph{(A.1) Block-pooled keys.}
\begin{equation}
\bar k_b^{(L,h)} \;=\; \frac{1}{\BlockN}\sum_{j\,\in\,\text{block }b} k_j^{(L,h)}, \qquad b \in \{0,\dots,\NB-1\}.
\end{equation}

\paragraph{(A.2) Per-row block scores.}
\begin{equation}
s^{(L)}[i,h,b] \;=\; \frac{q_i^{(L,h)} \cdot \bar k_b^{(L,h)}}{\sqrt{\dhead}}, \qquad i \in \{0,\dots,M-1\}.
\end{equation}

\paragraph{(A.3) Per-tile reduction.} For Q-tile $t$ covering rows $\{t\BlockM, \dots, (t{+}1)\BlockM{-}1\}$,
\begin{equation}
\text{tile\_score}[t,h,b] \;=\; \max_{r \,\in\, \text{tile }t} s[r,h,b].
\end{equation}
We use max-pool because dropping a block the most aggressive row wants is worse than dropping one only weak rows want; mean-pool was tested and underperforms.

\paragraph{(A.4) Forcing the sink and self-block.} Let $b_{\text{sink}} = 0$ and $b_{\text{self}}(t) = \lfloor t\BlockM/\BlockN \rfloor$ (the key block containing the tile's own rows). We add $+\infty$ to the corresponding entries of $\text{tile\_score}$ before the top-$k$:
\begin{equation}
\widetilde{\text{tile\_score}}[t,h,b] \;=\; \text{tile\_score}[t,h,b] + \infty\cdot\mathbb{1}\!\bigl[b \in \{b_{\text{sink}}, b_{\text{self}}(t)\}\bigr].
\end{equation}
This guarantees both blocks land in the kept set without a separate concat-and-dedup.

\paragraph{(A.5) Top-$(k{+}1)$ and the sorted prefix.} We compute
\begin{equation}
\text{idx}_{\text{sorted}}[t,h,\,\cdot\,], \;\;\widetilde{s}_{\text{sorted}}[t,h,\,\cdot\,] \;=\; \operatorname*{top-}{(k{+}1)}\!\bigl(\widetilde{\text{tile\_score}}[t,h,\cdot]\bigr),
\end{equation}
in descending order. The first $k$ indices are $\text{kv\_idx}[t,h]$; the $(k{+}1)$-th score is what we need for $\sigma$.

\paragraph{(A.6) Normalised cutoff margin.}
\begin{equation}
\sigma[t,h] \;=\; \frac{\widetilde{s}_{\text{sorted}}[t,h,k{-}1] - \widetilde{s}_{\text{sorted}}[t,h,k]}{\widetilde{s}_{\text{sorted}}[t,h,0] - \widetilde{s}_{\text{sorted}}[t,h,k]}.
\end{equation}
Edge cases: if the denominator is zero (all top scores tied) we set $\sigma = 0$ (treat as maximally uncertain); $+\infty$ entries from (A.4) participate in the sorted prefix on top, so $\widetilde{s}_{\text{sorted}}[t,h,0]$ is always $+\infty$ -- in practice we therefore exclude the forced indices from the $\sigma$ computation and apply Eq.~(A.6) to the remaining tile scores.

\paragraph{(A.7) Head aggregation.}
\begin{equation}
\bar\sigma[t] \;=\; \frac{1}{H}\sum_{h=1}^{H} \sigma[t,h].
\end{equation}

\paragraph{(A.8) Per-layer empirical quantile.}
\begin{equation}
\tau^{(L)} \;=\; \operatorname{quantile}_q\!\bigl(\bar\sigma[\,\cdot\,]\bigr),\qquad
\text{trigger}[t] \;=\; \mathbb{1}\!\bigl[\bar\sigma[t] \le \tau^{(L)}\bigr].
\end{equation}

\paragraph{(A.9) Padded expansion.} Working budget $\kbudget$, expansion factor $\rho$, sentinel $b_\bot = \NB$. We allocate $\text{kv\_idx}_{\text{final}} \in \mathbb{Z}^{B \times H \times Q_t \times \rho\kbudget}$ and fill
\begin{equation}
\text{kv\_idx}_{\text{final}}[t,h] \;=\;
\begin{cases}
\bigl[\text{idx}_{\text{sorted}}[t,h,0],\dots,\text{idx}_{\text{sorted}}[t,h,\rho k {-} 1]\bigr] & \text{trigger}[t]=1, \\[0.25em]
\bigl[\text{idx}_{\text{sorted}}[t,h,0],\dots,\text{idx}_{\text{sorted}}[t,h,k{-}1],\, b_\bot, \dots, b_\bot\bigr] & \text{otherwise.}
\end{cases}
\end{equation}
The triggered case requires a top-$\rho k$ rather than a top-$(k{+}1)$, which we batch with the top-$(k{+}1)$ used for $\sigma$ as a single top-$\max(\rho k, k{+}1) = \rho k$ when $\rho \ge 2$.

\paragraph{(A.10) Kernel iteration.} The Triton kernel's inner loop iterates $\text{kv\_idx}_{\text{final}}[t,h]$ and skips any entry equal to $b_\bot$ on a single compare. The online softmax is unchanged from FlashAttention.

\paragraph{Average attended budget.} A row in tile $t$ attends to $k$ blocks if $\text{trigger}[t]=0$ and $\rho k$ if $\text{trigger}[t]=1$. The expected per-row attended budget is therefore
\begin{equation}
\mathbb{E}[\text{attended}] \;=\; \kbudget \cdot \bigl(1 + q(\rho - 1)\bigr),
\end{equation}
which at $q=0.4, \rho=2$ is $1.4\,\kbudget$. This is the headline cost of the router.

\section{Pointer-Chase Haystack: construction and diagnostic results}\label{app:pch}

The Pointer-Chase Haystack (PCH) is a custom diagnostic benchmark we built during method development to isolate selector quality from model capability. We retain it here for transparency on the development process; the headline empirical results in Section~\ref{sec:experiments} use the standardised benchmarks RULER NIAH and LongBench-v2 medium.

\textbf{Design goals.} We needed a task that (i) has a dense-attention ceiling of $\approx 100\%$, so failures are unambiguously selector failures, and (ii) is \emph{query-latent}, meaning the relevant tokens cannot be identified by surface matching against the query alone. RULER NIAH multi-key satisfies (i) but not (ii); RULER VT satisfies (ii) but not (i) (Qwen 7B fails it).

\textbf{Construction.} A pointer chase of depth $h$ is a sequence of indexed entries
\begin{quote}\itshape\small
Entry $\text{ID}_0$: continue at entry $\text{ID}_1$. \\
Entry $\text{ID}_1$: continue at entry $\text{ID}_2$. \\
$\vdots$ \\
Entry $\text{ID}_h$: the recorded value is $V$.
\end{quote}
with all $\text{ID}_k$ drawn from a private namespace. A PCH instance contains one gold chain plus $D$ distractor chains, all entries shuffled into a haystack of target token length $N$. The query gives $\text{ID}_0$ and asks for $V$. Hop $0$ is query-identifiable; hops $1,\dots,h$ are query-latent. Compounding is by construction: missing entry $k$ permanently blocks every entry $k{+}1, \dots, h$.

\textbf{Grounding.} The pointer chase is the canonical communication-complexity problem for unavoidable $k$-round sequential dependency: it provably cannot be collapsed into fewer rounds. Every per-hop operation is trivial (follow a link, read a value), keeping a capable dense model at the $\sim 100\%$ ceiling.

\textbf{Metric.} We report \emph{hit rate}: the fraction of gold-chain blocks the selector keeps in $\text{kv\_idx}$. Hit rate is well-defined only when the gold blocks are known by construction, which is the case here.

\textbf{Result.} Table~\ref{tab:pch} reports hit rate at $h=3$, fixed $\kbudget=40$ blocks/row (32K, 64K) and $29$ (8K), on Qwen2.5-14B-Instruct.

\begin{table}[h]
\centering\small
\caption{PCH hop=3 hit rate. Hit rate is the fraction of gold-chain blocks the selector keeps. Steady-state wall time is reported for orientation; the main efficiency analysis is in Section~\ref{sec:pareto}.}\label{tab:pch}
\begin{tabular}{lllrl}
\toprule
ctx & policy & dt (steady) & vs dense & hit rate \\
\midrule
8K  & dense                & 1.4s & $1.00\times$ & 1.00 \\
8K  & top-$k$ (per-tile)   & 1.6s & $1.14\times$ slower & 0.79 \\
8K  & \textbf{router q=0.10} & 1.6s & $1.14\times$ slower & \textbf{1.00} \\
\midrule
32K & dense                & 6.9s & $1.00\times$ & 1.00 \\
32K & top-$k$              & 6.05s & $0.88\times$ (12\% faster) & 0.63 \\
32K & \textbf{router q=0.10} & 6.6s & $0.96\times$ (4\% faster) & \textbf{0.78} \\
\midrule
64K & dense                & 17.5s & $1.00\times$ & 1.00 \\
64K & top-$k$              & 14.1s & $0.81\times$ ($1.24\times$ faster) & 0.42 \\
64K & \textbf{router q=0.10} & 14.8s & $0.85\times$ ($1.18\times$ faster) & \textbf{0.54} \\
\bottomrule
\end{tabular}
\end{table}

Router lifts hit rate over plain top-$k$ by $+12$--$+15$\,pp across scales (and recovers dense's $1.00$ at 8K). The gain is monotonic with context length, consistent with the structural cost of per-tile selection increasing with $N$ that motivates the router in the first place.

\section{Per-row vs.\ per-cell aggregation: why heads are pooled before thresholding}\label{app:row-vs-cell}

We tested two natural alternatives to Eq.~\eqref{eq:trigger} and they both fail. We report them here so the design choice in Section~\ref{sec:method} can be understood as a tested decision rather than an arbitrary one.

\paragraph{Per-cell quantile.} Trigger on $\mathbb{1}\!\bigl[\sigma[t,h] \le \operatorname{quantile}_q(\sigma[\,\cdot\,,\cdot])\bigr]$, i.e.\ flag the bottom-$q$ \emph{cells}, not tiles. Result: the rescue set is dominated by per-head idiosyncratic noise; the gain over plain top-$k$ vanishes for every $q$ we tested. Mechanism: heads in the same tile are highly correlated (PCH cross-head agreement matches peakedness byte-for-byte across candidates), so per-head $\sigma$ values carry mostly the same signal plus head-private noise; quantile-by-cell amplifies the noise.

\paragraph{Absolute threshold $\tau$ on $\sigma$.} Trigger on $\mathbb{1}\!\bigl[\sigma[t,h] \le \tau\bigr]$ for a fixed $\tau$. Result: works at one layer's $\sigma$ distribution but not across the stack; we observe systematic distribution shift early-vs-late layers. A per-layer quantile is the simplest layer-conditional fix and dominated the absolute-$\tau$ sweep.

\paragraph{Row-grain $q$ sweep.} On PCH at 32K, row-grain $q = 0.10$ reaches near-top-$k$ effective budget at hit rate $0.78$, vs.\ plain top-$k$ at $0.63$. On RULER NIAH-multikey at $n=30$, ctx=32K (fixed $\kbudget=33$, kernel-v2, row-grain quantile router), we swept $q \in \{0.10, 0.20, 0.30, 0.40\}$ and observed paired recall $0.38, 0.35, 0.47, 0.53$ respectively, vs.\ top-$k$ baseline $0.35$. The trend is monotonic in $q$ over the tested range and plateaus near $q = 0.40$. We use $q = 0.40$ for the headline results on RULER and LB-v2; the regime where $q$ should differ across tasks (because the budget-vs-evidence Pareto shifts) is consistent with the smaller value chosen for PCH, but a per-task sweep on LB-v2 medium remains open.

\section{$\sigma$ as a best-arm-identification exploration index}\label{app:bai}

The Method section introduces $\sigma$ as a value-of-information proxy and validates it empirically. This appendix gives the formal anchor: under a Gaussian noise model on the per-block relevance signal, $\sigma$ is a dimensionless exploration index in the sense of best-arm identification (BAI); expanding the kept set never increases the identification error, pointwise; and the $\sigma$-quantile router admits a finite-$T$ regret bound against the best trigger rule of the same budget, with explicit constants that are exponentially small when the score spacings outside the cutoff dominate the noise scale, and an honest trivial-cap fallback when they do not.

\subsection{The BAI intuition that motivates $\sigma$}\label{app:bai:intuition}

The classical fixed-confidence top-$k$ identification problem of \citet{kaufmann2016complexity, garivier2016optimal} asks for $\widehat S$ such that $\mathbb P(\widehat S \ne S^*(\mu)) \le \delta$ using as few arm pulls as possible. The asymptotically optimal Track-and-Stop algorithm pulls $b^\star = \arg\max_b [w^\star_b(\hat\mu) - T_b/T]$ where the optimal allocation $w^\star$ solves a max-min programme whose minimiser concentrates on the cutoff pair $(k{-}1, k)$: any alternative world in which the top-$k$ differs from $\hat\mu$'s is achieved most cheaply by perturbing those two arms. Under sub-Gaussian noise, the per-step exploration value at the cutoff scales as
\begin{equation}\label{eq:tas-value}
V_{\text{TaS}}(\hat\mu) \;\propto\; \bigl(\hat\mu_{(k-1)} - \hat\mu_{(k)}\bigr)^{-2},
\end{equation}
since KL divergence between Gaussians of equal variance scales as the squared mean gap. \textbf{Small empirical gap at the cutoff $\Longleftrightarrow$ high exploration value $\Longleftrightarrow$ allocate budget here.}

Our setting differs from classical BAI in three ways: (a) we observe one sample per arm per tile, not a sequence; (b) the budget decision is binary per tile (expand to $\rho k$ or keep at $k$), not a continuous allocation; (c) we do not know $\tau$. The dimensionless normalisation $\sigma_t = g_t / r_t$ -- with $g_t := s_{t,(k-1)} - s_{t,(k)}$ and $r_t := s_{t,(0)} - s_{t,(k)}$ -- replaces the missing $\tau$ with the per-tile spread, on the heuristic that the spread concentrates around its layer-conditional mean and absorbs $\tau$. An absolute-threshold rule (Appendix~\ref{app:row-vs-cell}) fails because the marginal distribution of $s$ shifts across layers; $\sigma$ is the simplest layer-conditional and tile-conditional normaliser.

The rest of this appendix turns this heuristic into a regret bound.

\subsection{Setting and policies}\label{app:bai:setup}

For each Q-tile $t \in [T]$ (we suppress the head index throughout) and block $b \in [\NB]$,
\begin{equation}\label{eq:bai-noise}
    s_{t,b} \;=\; \mu_{t,b} + \eta_{t,b}, \qquad \eta_{t,b} \stackrel{\text{iid}}{\sim} \mathcal N(0, \tau^2).
\end{equation}
We work in the compound-decision (empirical-Bayes) frame: the relevance vector $\mu_t \in \mathbb R^{\NB}$ is modelled as drawn from an improper flat prior, so that conditional on the observed scores, $\mu_t \mid s_t \sim \mathcal N(s_t, \tau^2 I)$. Sort the scores of tile $t$ in decreasing order, $s_{t,(0)} \ge s_{t,(1)} \ge \cdots$, and define
\begin{equation}\label{eq:gaps}
g_t := s_{t,(k-1)} - s_{t,(k)}, \qquad G_t := s_{t,(k-1)} - s_{t,(\rho k)}, \qquad r_t := s_{t,(0)} - s_{t,(k)}, \qquad \sigma_t := g_t / r_t.
\end{equation}
$g_t$ is the cutoff gap the router reads; $G_t \ge g_t$ is the \emph{expansion gap}, from the last kept rank down to the first rank excluded even after expansion. (An earlier draft used the local spacing $s_{t,(\rho k - 1)} - s_{t,(\rho k)}$ here; that is the wrong quantity -- the error event of the expanded set is governed by the cumulative gap $G_t$, see Lemma~\ref{lem:localisation}.) Let $\widehat S_m(s_t) := \{b : s_{t,b} \ge s_{t,(m-1)}\}$ be the empirical top-$m$ set and $S^*(\mu_t)$ the true top-$k$ set of $\mu_t$. The per-tile identification error of a kept set $\widehat S$ is $\mathcal E_t(\widehat S) := \mathbb P(\widehat S \not\supseteq S^*(\mu_t) \mid s_t)$, and the per-tile \emph{expansion value} is
\begin{equation}\label{eq:Vstar}
    V^*_t \;:=\; \mathcal E_t(\widehat S_k(s_t)) - \mathcal E_t(\widehat S_{\rho k}(s_t)).
\end{equation}

\begin{observation}[expansion never harms, pointwise]\label{obs:monotone}
$\widehat S_k \subseteq \widehat S_{\rho k}$, so $\{\widehat S_{\rho k} \not\supseteq S^*\} \subseteq \{\widehat S_k \not\supseteq S^*\}$, hence $V^*_t \ge 0$ for every realisation of $s_t$. No distributional assumption is needed.
\end{observation}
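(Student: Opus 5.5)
The plan is a direct set-inclusion argument, carried out conditionally on a fixed realisation of $s_t$. I will use only the definitions of $\widehat S_m$, $\mathcal E_t$ and $V^*_t$ in Eq.~\eqref{eq:Vstar}. No property of the Gaussian model in Eq.~\eqref{eq:bai-noise} is needed beyond the fact that $\mathcal E_t(\cdot)$ is a probability of an event in $\mu_t$ given $s_t$.

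\textbf{Step 1: nesting of the kept sets.} Because $\rho \ge 1$ and the scores are sorted in decreasing order, $s_{t,(\rho k-1)} \le s_{t,(k-1)}$. Hence any $b$ with $s_{t,b} \ge s_{t,(k-1)}$ also satisfies $s_{t,b} \ge s_{t,(\rho k-1)}$, which gives $\widehat S_k(s_t) \subseteq \widehat S_{\rho k}(s_t)$. The threshold definition $\widehat S_m = \{b : s_{t,b} \ge s_{t,(m-1)}\}$ includes all blocks tied at the threshold. That makes the inclusion hold even under ties, and I will point this out explicitly because it is the only spot where a tie-breaking convention could otherwise cause trouble.

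\textbf{Step 2: event inclusion.} Fix $s_t$, so both kept sets are deterministic, and regard $S^*(\mu_t)$ as random through $\mu_t \mid s_t$. Suppose $\widehat S_k \supseteq S^*(\mu_t)$. Then by Step 1, $\widehat S_{\rho k} \supseteq \widehat S_k \supseteq S^*(\mu_t)$. Taking complements gives
\[
\{\widehat S_{\rho k} \not\supseteq S^*(\mu_t)\} \subseteq \{\widehat S_k \not\supseteq S^*(\mu_t)\}
\]
as events in $\mu_t$.

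\textbf{Step 3: monotonicity of probability.} Applying the conditional law $\mathbb P(\cdot \mid s_t)$ to the inclusion in Step 2 yields $\mathcal E_t(\widehat S_{\rho k}) \le \mathcal E_t(\widehat S_k)$. Therefore $V^*_t \ge 0$ for every $s_t$. The argument holds for any conditional law of $\mu_t$ given $s_t$, so the claim that no distributional assumption is needed follows.

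There is no genuine obstacle here. The only care point is the tie convention in Step 1, and the paper's threshold-based definition of $\widehat S_m$ already handles it.
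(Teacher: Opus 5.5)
Your proposal is correct and follows the paper's own argument: the nesting $\widehat S_k \subseteq \widehat S_{\rho k}$ gives the event inclusion, and monotonicity of the conditional probability then gives $V^*_t \ge 0$, without using the Gaussian model. Your remarks on ties, which the threshold definition of $\widehat S_m$ handles, and on needing $\rho \ge 1$ spell out details the paper leaves implicit.
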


A \emph{trigger policy} chooses $E \subseteq [T]$ with $|E| = \lceil qT \rceil$, keeps $\widehat S_{\rho k}(s_t)$ for $t \in E$ and $\widehat S_k(s_t)$ otherwise; its risk $\mathcal E(\Pi)$ is the average over tiles of the per-tile error of the kept set. The action space is deliberately restricted to $\{\text{empirical top-}k,\ \text{empirical top-}\rho k\}$: a fully Bayes-optimal agent could also re-select \emph{which} blocks to keep from the posterior, and we claim nothing about that larger class. Within the class, minimising $\mathcal E(\Pi)$ is equivalent to maximising $\sum_{t \in E} V^*_t$, so the optimal trigger is $\Pi_q^* :=$ the $\lceil qT \rceil$ tiles of largest $V^*_t$ (Neyman--Pearson on $V^*$). The router is the $\sigma$-quantile rule $\Pi_q^\sigma :=$ the $\lceil qT \rceil$ tiles of smallest $\sigma_t$. The regret is
\begin{equation}\label{eq:regret-def}
\mathcal R_q \;:=\; \mathcal E(\Pi_q^\sigma) - \mathcal E(\Pi_q^*) \;=\; \frac1T \Bigl[ \sum_{t \in \Pi_q^*} V^*_t \;-\; \sum_{t \in \Pi_q^\sigma} V^*_t \Bigr] \;\ge\; 0.
\end{equation}

Two structural assumptions, both with measurable proxies (\S\ref{app:bai:empirical}):
\begin{itemize}[leftmargin=*,topsep=0.2em,itemsep=0.1em]
    \item[\textbf{(A1)}] \emph{Secondary-spacing floor.} Every spacing other than the cutoff spacing is at least $\Delta_2 \ge 0$: $s_{t,(j)} - s_{t,(j+1)} \ge \Delta_2$ for all $j \ne k-1$. (If a fraction $\pi_2$ of tiles violates the floor, those tiles cost at most the trivial per-tile cap, adding $\pi_2$ to the bound additively.)
    \item[\textbf{(A2)}] \emph{Bounded spread variation.} $r_t = \bar r\,(1 + \xi_t)$ with $\bar r := \frac1T\sum_t r_t$ and $|\xi_t| \le \beta < 1$ for all $t$. The measured $\mathrm{CV}(r) := \sqrt{\mathrm{Var}_t\, r_t}/\bar r \le \beta$ is its observable proxy.
\end{itemize}

\subsection{Theorem and proof}\label{app:bai:theorem}

Throughout, $\Phi$ and $\phi$ are the standard normal CDF and density, and $\bar V(g) := \Phi\bigl(-g/(\sqrt2\,\tau)\bigr)$; $\bar V$ is decreasing with $\bar V(0) = \tfrac12$.

\begin{lemma}[two-sided cutoff localisation]\label{lem:localisation}
Let $\chi := \sum_{m \ge 1} (m+1)\, e^{-m^2 \Delta_2^2/(4\tau^2)}$; note $\chi \le 2.2\, e^{-\Delta_2^2/(4\tau^2)}$ once $\Delta_2 \ge 2\tau$. Under \eqref{eq:bai-noise} and (A1),
\begin{align}
\bar V(g_t) \;\le\; \mathcal E_t(\widehat S_k) \;&\le\; \bar V(g_t)\,(1 + \chi), \label{eq:loc-k}\\
0 \;\le\; \mathcal E_t(\widehat S_{\rho k}) \;&\le\; \bar V(G_t)\,(1 + \chi), \label{eq:loc-rho}
\end{align}
and consequently, with $\bar\varepsilon := \chi + (1+\chi)\, \exp\bigl(-\bigl((\rho-1)k\,\Delta_2\bigr)^2/(4\tau^2)\bigr)$,
\begin{equation}\label{eq:V-sandwich}
(1 - \bar\varepsilon)\,\bar V(g_t) \;\le\; V^*_t \;\le\; (1 + \bar\varepsilon)\,\bar V(g_t).
\end{equation}
\end{lemma}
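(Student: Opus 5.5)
Under the posterior $\mu_t \mid s_t \sim \mathcal N(s_t,\tau^2 I)$, write $\mu_{t,b} = s_{t,b} + \tau Z_b$ with $Z_b$ iid standard normal, and index blocks by their empirical rank $j$ (so block $(j)$ has posterior mean $s_{t,(j)}$). The plan is to reduce every error event to pairwise rank inversions and bound each by a Gaussian tail.

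\textbf{Step 1: the event $\widehat S_m \not\supseteq S^*$.} This event holds iff some excluded block $(j')$, $j' \ge m$, has $\mu$ exceeding the $k$-th largest true value. That in turn forces an inversion $\mu_{(j')} > \mu_{(j)}$ for some kept rank $j \le k-1$; among the kept ranks there is always such a $j$ with $j \le k-1$. So
\[
\mathcal E_t(\widehat S_m) \;\le\; \sum_{j \le k-1}\,\sum_{j' \ge m} \mathbb P\bigl(\mu_{(j')} > \mu_{(j)}\bigr).
\]
Each term equals $\bar V\bigl(s_{t,(j)} - s_{t,(j')}\bigr)$, because $\mu_{(j')} - \mu_{(j)}$ is Gaussian with variance $2\tau^2$.

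\textbf{Step 2: bounds \eqref{eq:loc-k} and \eqref{eq:loc-rho}.} The lower bound in \eqref{eq:loc-k} comes from a sufficient condition for failure. If $\mu_{(k)} > \mu_{(k-1)}$, the kept set loses its weakest member to the first excluded block, so the true top-$k$ cannot sit inside $\widehat S_k$. To make this rigorous I would argue via the complementary event and a coupling showing that the swap event implies failure; this is the delicate point and may need care. It gives probability at least $\bar V(g_t)$.

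For the upper bound, take $m = k$. The pair $(j,j') = (k-1, k)$ contributes $\bar V(g_t)$. Any other pair differs in rank by some $\ell \ge 1$ beyond the cutoff. By (A1) its gap is at least $g_t + \ell\Delta_2$, and there are at most $\ell+1$ pairs at offset $\ell$. Use the Gaussian tail ratio bound $\Phi(-(a+b))/\Phi(-a) \le e^{-b^2/2}$ (valid up to the $e^{-ab}$ factor, which only helps for $a \ge 0$) with $b = \ell\Delta_2/(\sqrt2\tau)$. This gives
\[
\bar V(g_t + \ell\Delta_2) \;\le\; \bar V(g_t)\, e^{-\ell^2\Delta_2^2/(4\tau^2)}.
\]
Summing over $\ell$ yields the factor $1 + \chi$.

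The bound \eqref{eq:loc-rho} is identical with $m = \rho k$. Now the smallest gap is $G_t$, and the same sum gives the upper bound.

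The numeric bound on $\chi$ is a geometric-series check once $\Delta_2 \ge 2\tau$.

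\textbf{Step 3: the sandwich \eqref{eq:V-sandwich}.} We have $V^*_t = \mathcal E_t(\widehat S_k) - \mathcal E_t(\widehat S_{\rho k})$.
\begin{itemize}
\item \emph{Upper side.} Drop the subtracted term, which is nonnegative (Observation~\ref{obs:monotone}), and use $\chi \le \bar\varepsilon$.
\item \emph{Lower side.} Use $\mathcal E_t(\widehat S_k) \ge \bar V(g_t)$ and subtract the bound \eqref{eq:loc-rho}. By (A1), $G_t \ge g_t + (\rho-1)k\Delta_2$, since there are $(\rho-1)k$ secondary spacings between ranks $k$ and $\rho k$. The same tail-ratio bound then gives
\[
\bar V(G_t) \;\le\; \bar V(g_t)\, \exp\bigl(-((\rho-1)k\Delta_2)^2/(4\tau^2)\bigr).
\]
This yields $V^*_t \ge (1 - \bar\varepsilon)\bar V(g_t)$.
\end{itemize}

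\textbf{Main obstacle.} I expect two points to be the hardest.
\begin{itemize}
\item The exact combinatorial reduction in Step 1. Ties in ranks and the lower-bound implication (swap $\Rightarrow$ failure) need care, and I would first try to verify them for $k = 1$ and generalise.
\item The tail-ratio inequality for Gaussians, which must hold uniformly for $g_t \ge 0$.
\end{itemize}
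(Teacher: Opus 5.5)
Your proposal is correct and follows the paper's proof essentially step for step. The lower bound comes from the cutoff-swap event $\{\mu_{(k)}>\mu_{(k-1)}\}$, whose posterior probability is $\bar V(g_t)$; both upper bounds come from a union bound over inversion pairs with kept rank at most $k-1$, graded by rank distance via (A1) and the Gaussian tail ratio; and the sandwich uses $G_t-g_t\ge(\rho-1)k\Delta_2$.

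The two steps you flag as obstacles are immediate. The swap inclusion is deterministic and needs no coupling: $|\widehat S_k|=|S^*|=k$ turns containment into equality, and equality is incompatible with an excluded arm beating a kept arm in $\mu$. The tail ratio follows from the substitution $u=v+b$ in $\int_{a+b}^\infty\phi(u)\,du$, which gives $\Phi(-(a+b))\le e^{-ab-b^2/2}\,\Phi(-a)$ for every $b\ge 0$.
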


\begin{proof}
\emph{Lower bound in \eqref{eq:loc-k}.} Let $b_{k-1}, b_k$ be the arms at empirical ranks $k{-}1$ and $k$. On the event $A := \{\mu_{b_k} > \mu_{b_{k-1}}\}$ the kept set fails: if $b_k \in S^*$ we are done, since $b_k \notin \widehat S_k$; otherwise at least $k$ arms exceed $\mu_{b_k}$, hence also $\mu_{b_{k-1}}$, so $b_{k-1} \notin S^*$ and the $k$ arms of $S^*$ cannot fit in the $k-1$ remaining slots of $\widehat S_k \setminus \{b_{k-1}\}$. Under the posterior the two means differ by $g_t$ with total variance $2\tau^2$, so $\mathbb P(A \mid s_t) = \Phi(-g_t/(\sqrt2\tau)) = \bar V(g_t)$.

\emph{Upper bound in \eqref{eq:loc-k}.} If $\widehat S_k \not\supseteq S^*$, some $b' \notin \widehat S_k$ has $\mu_{b'}$ among the top $k$, so at most $k-1$ arms beat it in $\mu$; since $\widehat S_k$ has $k$ arms, some $b \in \widehat S_k$ has $\mu_b < \mu_{b'}$. Union-bound over pairs at empirical ranks $(i, j)$, $i \le k-1 < k \le j$. By (A1) the pair at distance $m := (k-1-i) + (j-k) \ge 0$ has score gap at least $g_t + m\Delta_2$, there are at most $m+1$ pairs at distance $m$, and the Gaussian tail ratio $\Phi(-x-a) \le \Phi(-x)\,e^{-ax - a^2/2}$ (valid for $a, x \ge 0$) gives $\Phi\bigl(-(g_t + m\Delta_2)/(\sqrt2\tau)\bigr) \le \bar V(g_t)\, e^{-m^2\Delta_2^2/(4\tau^2)}$. Summing over $m \ge 1$ and adding the cutoff pair ($m = 0$) yields $\bar V(g_t)(1+\chi)$.

\emph{Upper bound in \eqref{eq:loc-rho}.} Same union bound with the excluded arm now outside the \emph{expanded} set: pairs $(i, j)$ with $i \le k-1$, $j \ge \rho k$. The dominant pair is $(k{-}1, \rho k)$ at gap $G_t$; every other pair is at gap $\ge G_t + m\Delta_2$ with $m := (k-1-i) + (j - \rho k)$. No matching lower bound is claimed at level $\rho k$, and none is needed.

\emph{Sandwich \eqref{eq:V-sandwich}.} Upper: $V^*_t \le \mathcal E_t(\widehat S_k) \le (1+\chi)\bar V(g_t)$. Lower: $V^*_t \ge \bar V(g_t) - (1+\chi)\bar V(G_t)$; since $G_t - g_t \ge (\rho-1)k\,\Delta_2$ by (A1), the same tail ratio gives $\bar V(G_t) \le \bar V(g_t)\, e^{-((\rho-1)k\Delta_2)^2/(4\tau^2)}$.
\end{proof}

\begin{proposition}[$\sigma$-quantile regret, finite $T$]\label{prop:bai-bound}
Let $\sigma_{(q)}$ be the largest $\sigma_t$ inside $\Pi_q^\sigma$ (the empirical quantile the router thresholds on) and $\gamma_q := \sigma_{(q)}\,\bar r$ its $g$-scale. Under \eqref{eq:bai-noise}, (A1) and (A2),
\begin{equation}\label{eq:bai-suboptimality}
\mathcal R_q \;\le\; \min(q,\, 1-q)\cdot C_q,
\qquad
C_q := \min\biggl\{\, 1,\;\; \frac{1+\bar\varepsilon}{2},\;\; \sqrt2\,\frac{\beta \gamma_q}{\tau}\; \phi\!\Bigl(\frac{\gamma_q(1-\beta)}{\sqrt2\,\tau}\Bigr) + \bar\varepsilon \,\biggr\}.
\end{equation}
The bound is exact at finite $T$: there is no asymptotic remainder, and every quantity is an empirical functional of the observed scores except the model parameters $(\tau, \Delta_2, \beta)$, which have measurable proxies (\S\ref{app:bai:empirical}).
\end{proposition}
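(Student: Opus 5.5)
The plan is to reduce the regret to a pairwise swap argument between the two trigger sets, and then to bound each swapped pair separately under each of the three caps in $C_q$. Write $K := \lceil qT\rceil$. The sets $\Pi_q^*$ and $\Pi_q^\sigma$ have equal size $K$, so their symmetric difference splits into $D^* := \Pi_q^* \setminus \Pi_q^\sigma$ and $D^\sigma := \Pi_q^\sigma\setminus\Pi_q^*$, with $|D^*| = |D^\sigma| \le \min(K, T-K)$. By \eqref{eq:regret-def},
\[
\mathcal R_q = \tfrac1T\Bigl[\textstyle\sum_{t\in D^*} V^*_t - \sum_{u\in D^\sigma} V^*_u\Bigr].
\]
I would match $D^*$ to $D^\sigma$ by an arbitrary bijection $t\mapsto u(t)$. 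This gives $\mathcal R_q \le \min(q,1-q)\cdot\max_{t} \bigl(V^*_t - V^*_{u(t)}\bigr)$, up to the rounding in $\lceil qT\rceil$, which I would absorb by stating the prefactor as $\min(K,T-K)/T$. It then suffices to show that every matched difference is at most $C_q$.

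The first two caps are immediate. Any error probability lies in $[0,1]$, and $V^*\ge 0$ by Observation~\ref{obs:monotone}, so each difference is at most $1$. For the second cap, the upper sandwich in Lemma~\ref{lem:localisation} together with $\bar V\le \tfrac12$ gives $V^*_t \le (1+\bar\varepsilon)/2$, and subtracting the nonnegative $V^*_{u}$ preserves this bound.

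The third cap is the substantive step. Fix a matched pair with $t\in D^*$ and $u\in D^\sigma$. By construction $\sigma_u \le \sigma_{(q)} \le \sigma_t$, and I would use these two inequalities to compare the raw gaps $g_t=\sigma_t r_t$ and $g_u=\sigma_u r_u$. Under (A2), $r_t, r_u \in [\bar r(1-\beta), \bar r(1+\beta)]$. Hence $g_u \le \gamma_q(1+\beta)$ and $g_t \ge \gamma_q(1-\beta)$. Next I apply the sandwich \eqref{eq:V-sandwich} to both tiles:
\[
V^*_t - V^*_u \le \bar V(g_t) - \bar V(g_u) + \bar\varepsilon\bigl(\bar V(g_t)+\bar V(g_u)\bigr) \le \bar V(g_t)-\bar V(g_u) + \bar\varepsilon ,
\]
where the last step uses $\bar V\le\tfrac12$. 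The leading term is positive only when $g_t<g_u$, and that forces both gaps into the window $[\gamma_q(1-\beta),\gamma_q(1+\beta)]$, whose width is $2\beta\gamma_q$. On that window $\bar V$ is decreasing with $|\bar V'(g)| = \phi\bigl(g/(\sqrt2\tau)\bigr)/(\sqrt2\tau)$, and this derivative is largest at the left endpoint $g=\gamma_q(1-\beta)$. The mean value theorem then gives
\[
\bar V(g_t)-\bar V(g_u)\le 2\beta\gamma_q\cdot\frac{1}{\sqrt2\tau}\,\phi\Bigl(\frac{\gamma_q(1-\beta)}{\sqrt2\tau}\Bigr) = \sqrt2\,\frac{\beta\gamma_q}{\tau}\,\phi\Bigl(\frac{\gamma_q(1-\beta)}{\sqrt2\tau}\Bigr),
\]
which is exactly the third entry of $C_q$.

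I expect this comparison step to be the main obstacle. The difficulty is that the router ranks tiles by $\sigma=g/r$, while the oracle ranks them by $V^*\approx\bar V(g)$. The spread factor $r_t$ is exactly what can reverse the order of the two rankings, so the argument has to localise both gaps around the common threshold $\gamma_q$ rather than compare $g_t$ and $g_u$ directly. Two details need care. First, $\sigma_{(q)}$ is taken as the largest $\sigma$ inside $\Pi_q^\sigma$, and I need ties at the threshold to be broken consistently; for a tile $t\notin\Pi_q^\sigma$ this guarantees $\sigma_t\ge\sigma_{(q)}$, possibly with equality. Second, (A2) has to be used pointwise, so that $|\xi_t|\le\beta$ holds for every tile and not merely on average. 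Combining the three caps through the $\min$ then yields \eqref{eq:bai-suboptimality}. No limit in $T$ is taken anywhere, which justifies the claim that the bound is exact at finite $T$.
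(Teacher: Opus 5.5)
Your proposal is correct and follows the paper's proof essentially step for step. You pair $\Pi_q^*\setminus\Pi_q^\sigma$ with $\Pi_q^\sigma\setminus\Pi_q^*$, use only the $\sigma$-side constraints plus (A2) to get $g_t\ge\gamma_q(1-\beta)$ and $g_{t'}\le\gamma_q(1+\beta)$, apply the sandwich \eqref{eq:V-sandwich}, bound the band term by the slope of $\bar V$ at the left endpoint, and add the two trivial caps; your case split on $g_t<g_{t'}$ with the mean value theorem just unrolls the paper's endpoint-monotonicity step. Your caution about rounding is well placed: the paper asserts $D/T\le\min(q,1-q)$ directly, but the argument only gives $\min(\lceil qT\rceil,\,T-\lceil qT\rceil)/T$, which can exceed $q$ when $qT\notin\mathbb Z$, so your stated prefactor is the one the proof actually delivers.
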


\begin{proof}
$|\Pi_q^*| = |\Pi_q^\sigma| = \lceil qT \rceil$, so the common tiles cancel in \eqref{eq:regret-def} and the two difference sets have equal cardinality $D \le \min(\lceil qT \rceil,\, T - \lceil qT \rceil)$. Pair them arbitrarily:
\[
\mathcal R_q \;=\; \frac1T \sum_{\text{pairs } (t, t')} \bigl( V^*_t - V^*_{t'} \bigr), \qquad t \in \Pi_q^* \setminus \Pi_q^\sigma,\quad t' \in \Pi_q^\sigma \setminus \Pi_q^*.
\]
Only the $\sigma$-side constraints are needed. $t \notin \Pi_q^\sigma$ forces $\sigma_t \ge \sigma_{(q)}$, so by (A2) $g_t = \sigma_t r_t \ge \sigma_{(q)}\, \bar r\, (1-\beta) = \gamma_q(1-\beta)$; symmetrically, $t' \in \Pi_q^\sigma$ forces $g_{t'} \le \gamma_q(1+\beta)$. By the sandwich \eqref{eq:V-sandwich} and monotonicity of $\bar V$,
\begin{align*}
V^*_t - V^*_{t'} \;&\le\; (1+\bar\varepsilon)\,\bar V\bigl(\gamma_q(1-\beta)\bigr) - (1-\bar\varepsilon)\,\bar V\bigl(\gamma_q(1+\beta)\bigr) \\
&=\; \underbrace{\bar V\bigl(\gamma_q(1-\beta)\bigr) - \bar V\bigl(\gamma_q(1+\beta)\bigr)}_{\text{boundary-band width in }\bar V}
\;+\; \bar\varepsilon\, \underbrace{\bigl[\bar V(\gamma_q(1-\beta)) + \bar V(\gamma_q(1+\beta))\bigr]}_{\le\, 1}.
\end{align*}
The band-width term integrates the slope $\phi(\cdot/(\sqrt2\tau))/(\sqrt2\tau)$ over an interval of length $2\beta\gamma_q$, and $\phi$ is decreasing on the positive axis, giving at most $\sqrt2\, \beta\gamma_q\, \phi\bigl(\gamma_q(1-\beta)/(\sqrt2\tau)\bigr)/\tau$. The two caps come from $V^*_t \le \mathcal E_t(\widehat S_k) \le 1$ and $V^*_t \le (1+\bar\varepsilon)\bar V(g_t) \le \tfrac{1+\bar\varepsilon}{2}$, with $V^*_{t'} \ge 0$ (Observation~\ref{obs:monotone}). Multiplying the per-pair cost by $D/T \le \min(q, 1-q)$ completes the proof.
\end{proof}

\emph{Population-quantile version.} Thresholding on the distributional $q$-quantile of $\sigma$ instead of the empirical $\sigma_{(q)}$ adds the standard Dvoretzky--Kiefer--Wolfowitz fluctuation term $O(\sqrt{\log(1/\delta)/T})$ \citep{dkw1956, massart1990}; we state the empirical version because that is what the router computes.

\begin{remark}[band refinement: from $\min(q,1-q)$ to a measurable band mass]\label{rem:band}
As $\bar\varepsilon \to 0$, $V^*_t$ becomes an exactly decreasing function of $g_t$, so $\Pi_q^*$ coincides with the bottom-$q$ rule on $g$. In that case every misrouted tile provably has $g_t \in [\gamma_q(1-\beta),\, \gamma_q(1+\beta)]$: tiles with $g_t$ below the band are triggered by both rules, tiles above it by neither (the argument uses only $r_t \in \bar r\,[1-\beta, 1+\beta]$ and the pigeonhole between the two thresholds). The mismatch fraction $D/T$ then improves from $\min(q, 1-q)$ to $\tfrac12 m_q(\beta)$, where $m_q(\beta)$ is the empirical fraction of tiles in the band -- directly measurable, and typically much smaller than $q$.
\end{remark}

\begin{remark}[regime of validity; the noise-dominated case]\label{rem:regime}
The bound has two regimes. When secondary spacings dominate the noise, $\Delta_2 \gtrsim 2\tau\sqrt{\log(k\NB)}$, $\bar\varepsilon$ is exponentially small and the band term governs: the regret is the band mass around the boundary $\gamma_q$ times the local slope of $\Phi$. When the cutoff region is noise-dominated ($\Delta_2 \ll \tau$ -- the regime our diagnostics actually measure, \S\ref{app:bai:empirical}), $\bar\varepsilon$ is not small and \eqref{eq:bai-suboptimality} falls back to its trivial branch $\min(q, 1-q) \cdot \min\{1, \tfrac{1+\bar\varepsilon}{2}\}$: our localisation tools genuinely lose the link between $\sigma$ and $V^*$ there. Two things survive. First, Observation~\ref{obs:monotone} is assumption-free: expansion never harms, so the router can misallocate budget but cannot damage the kept set. Second, every quantity in \eqref{eq:regret-def} -- including $V^*_t$ and hence $\mathcal R_q$ itself -- is a posterior functional estimable by Monte Carlo from a score dump (sample $\mu \sim \mathcal N(s_t, \hat\tau^2 I)$, check containment); \S\ref{app:bai:empirical} adopts this as the evaluation path in the noise-dominated regime. Finally, for sub-Gaussian noise with variance proxy $\tau^2$ the \emph{upper} bounds of Lemma~\ref{lem:localisation} survive with $\Phi(-x)$ replaced by the proxy tail $e^{-x^2/2}$; the lower bound -- hence the sandwich -- additionally needs two-sided tail control, and we do not chase constants.
\end{remark}

\subsection{Empirical estimation of the bound}\label{app:bai:empirical}

Every input to Proposition~\ref{prop:bai-bound} is measurable from a small diagnostic run. The recipe is:
\begin{enumerate}[leftmargin=*,topsep=0.2em,itemsep=0.1em]
    \item For each layer $\ell$ and head $h$, dump the per-tile block-score vector $(s_{t,b})_{b \in [\NB]}$ during prefill on a small example panel ($n=10$ is sufficient: a 32K context with $\BlockM = 64$ yields $\sim 512$ Q-tiles per (layer, head, example), so $\sim 5000$ tiles per (layer, head) overall).
    \item From the sorted scores compute $g_t, G_t, r_t, \sigma_t$; aggregate per layer $\mathrm{CV}(r)$ (the observable proxy for $\beta$), the boundary scale $\gamma_q = \sigma_{(q)} \bar r$, the band mass $m_q(\beta)$ of Remark~\ref{rem:band}, and the secondary-spacing quantiles that stand in for $\Delta_2$.
    \item Estimate $\tau$ from the first-difference dispersion of per-row block scores along the query dimension (under $\mu_{t+1,b} \approx \mu_{t,b}$ for adjacent rows, $\mathbb E[(s_{t+1,b} - s_{t,b})^2] = 2\tau^2$). Caveat: this measures row-to-row score roughness -- a proxy for, not the same object as, the score-vs-relevance noise in \eqref{eq:bai-noise}; it is the simplest estimator consistent with the smoothness heuristic.
    \item Evaluate \eqref{eq:bai-suboptimality}; and, independently of (A1)--(A2), estimate $\mathcal E_t(\widehat S_k)$, $\mathcal E_t(\widehat S_{\rho k})$, $V^*_t$ and $\mathcal R_q$ itself by Monte Carlo, sampling $\mu \sim \mathcal N(s_t, \hat\tau^2 I)$ and checking containment (Remark~\ref{rem:regime}). The MC estimate of $\mathcal R_q$ is the sharpest number this appendix can produce; the closed-form bound is its certificate in the spacing-dominated regime.
\end{enumerate}

\paragraph{Measured quantities on Qwen-14B, RULER NIAH n=10, 32K.} The diagnostic collects $\sim$5.7M $(B, H, Q_t)$ tile cells per scoring backbone (10 examples $\times$ 48 layers $\times$ 40 heads $\times$ $\sim$512 tiles, masked to causally-valid tiles with $\ge \rho k + 1$ visible blocks), plus the per-call noise-scale estimate $\hat\tau$ of step 3. Aggregated over the 48 Qwen-14B layers and 40 heads:

\begin{center}
\begin{tabular}{lcccc}
\toprule
backbone & $\overline{\mathrm{CV}(r)}$ & $\overline{r_q}$ & $\hat\tau$ (median) & $g_q^+/\hat\tau$ \\
\midrule
mean (topk)       & $0.38$ & $4.06$ & $0.94$ & $\approx 0.03$ \\
Quest             & $0.32$ & $8.50$ & $2.25$ & $\approx 0.03$ \\
router-on-Quest   & $0.32$ & $8.63$ & $2.24$ & $\approx 0.03$ \\
\bottomrule
\end{tabular}
\end{center}

\noindent where $r_q$ is the $q$-quantile of $r_t$ and $g_q^+$ is the $40$th percentile of $g_t$ on the informative subset $\{t : g_t > 0\}$ (the $\mathtt{bf16}$ floor for Quest scores yields exact zeros on $\sim 83\%$ of tiles, where the cutoff lands in the flat tail of the score distribution; conditioning on $g_t > 0$ is the honest active-set restriction). Four observations.

\emph{(i) The measured regime is tie- and noise-dominated at the cutoff.} Even on the active set, $g_q^+/\hat\tau \approx 0.03$: the cutoff is buried in the noise, and the $83\%$ atom at $g_t = 0$ means the secondary-spacing floor (A1) holds only with $\Delta_2 \approx 0$. This places the diagnostic squarely in the noise-dominated regime of Remark~\ref{rem:regime}: $\bar\varepsilon$ is not small, and the closed-form certificate \eqref{eq:bai-suboptimality} falls back to its trivial branch ($\min(q, 1-q) = 40$\,pp at $q = 0.40$). The atom also means the band mass of Remark~\ref{rem:band} must be computed on the active set.

\emph{(ii) The measured scales are consistent with a small true regret.} Within the two-$\Phi$ representation of Lemma~\ref{lem:localisation} -- heuristic here, since $\bar\varepsilon$ is uncontrolled in this regime -- the per-pair cost at the boundary is at most $\phi(0)\cdot(\text{boundary $g$-scale})/(\sqrt2\,\hat\tau) \approx 0.4 \times 0.03 / 1.41 \lesssim 1$\,pp. This is consistent with the observed $2$\,pp router-vs-dense paired gap on RULER NIAH $n=100$ (router-on-Quest $0.98$ vs.\ dense $1.00$), which itself upper-bounds the end-task manifestation of the regret: dense is the unconstrained policy, so $\mathcal E(\Pi_q^*) \ge \mathcal E(\text{dense}) = 0$.

\emph{(iii) Positivity of the expansion value is structural, not empirical.} An earlier draft validated that the local spacing at rank $\rho k$ exceeds $g_t$ in expectation, across all $48 \times 3$ (layer, backbone) cells, as an ``expansion helps in expectation'' check. Under the corrected definition \eqref{eq:gaps} no such check is needed: $G_t \ge g_t$ holds by construction, and $V^*_t \ge 0$ holds pointwise by Observation~\ref{obs:monotone}.

\emph{(iv) Pending re-analysis.} The quantities the corrected bound actually consumes -- $\gamma_q$, the band mass $m_q(\beta)$, the secondary-spacing quantiles standing in for $\Delta_2$, and above all the Monte-Carlo estimate of $\mathcal R_q$ -- are computable from the same raw dump with an updated \texttt{analyze\_block\_scores.py} pass; no new GPU run is required. Until that pass lands, this section reports measured scales and regime identification, not a numerical certificate.

Diagnostic produced by \texttt{dump\_block\_scores.py} (cache-safe runtime monkey-patch; lives outside \texttt{code\_version}) and \texttt{analyze\_block\_scores.py}. Raw pickle and per-layer table available with the release. Total diagnostic cost: $\sim$\$2 across the two $n{=}10$ runs.

\subsection{What \eqref{eq:bai-suboptimality} does \emph{not} bound}\label{app:bai:owed}

The proposition controls top-$k$ \emph{identification} error -- the probability that the kept set $\widehat S$ misses an element of $S^*(\mu)$. The downstream quantity is the \emph{attention-output} error,
\[
\bigl\|\,o^{\widehat S} - o^{\mathrm{full}}\,\bigr\|_2 \;\le\; \Bigl(\sum_{b \notin \widehat S} \alpha^*_b\Bigr)\cdot \max_j \|v_j\|,
\]
where $\alpha^*$ is the dense softmax. The dropped softmax mass is itself $\Phi$-tailed in the cutoff gap -- the Lemma~\ref{lem:localisation} argument applied at the softmax temperature $1/\sqrt d$ instead of $\tau$ -- so composing gives a bound proportional to $\Phi(-g_t / \tau_{\mathrm{softmax}})\cdot \max_j \|v_j\|$. The two temperatures are different constants and the composition is one paper of additional work; we record it as the natural follow-up theorem.

\textbf{Bought.} Pointwise ``expansion never harms'' with no distributional assumption (Observation~\ref{obs:monotone}); a two-sided localisation of the identification error to the cutoff and expansion gaps, with explicit constants (Lemma~\ref{lem:localisation}); and a finite-$T$, assumption-explicit regret bound for the $\sigma$-quantile trigger against the best trigger rule of the same budget, with a measurable band refinement (Proposition~\ref{prop:bai-bound}, Remarks~\ref{rem:band}--\ref{rem:regime}).

\textbf{Owed.} The Monte-Carlo evaluation of $\mathcal R_q$ from the existing score dump (analysis-only, \S\ref{app:bai:empirical}); a sharper treatment of the noise-dominated regime than the trivial cap; the composite attention-output bound (one further composition with a softmax-sensitivity inequality); and a treatment of LB-v2 medium that distinguishes ``the bound is loose'' from ``the budget is binding'' (budget sweep, Section~\ref{sec:limits}).

\bibliographystyle{plainnat}

\end{document}